\definecolor{myred}{rgb}{0.5,0.0,0.0}
\definecolor{light-gray}{gray}{0.65}
\newcommand{\GC}[1]{\textcolor{light-gray}{#1}}
\newtheorem{theorem}{Theorem}
\newtheorem*{remark}{Remark}
\newtheorem{problem}{Problem}
\begin{document}

\begin{center}
{\Large \bfseries Clustering with Penalty for Joint Occurrence of Objects: Computational Aspects}\footnote{Preliminary results were presented in \cite{Sokol2020}.}
\end{center}

\begin{center}
{\bfseries Ondřej Sokol} \\
Prague University of Economics and Business \\
Winston Churchill Square 4, 130 67 Prague 3, Czechia \\
\href{mailto:ondrej.sokol@vse.cz}{ondrej.sokol@vse.cz} \\
Corresponding Author
\end{center}

\begin{center}
{\bfseries Vladimír Holý} \\
Prague University of Economics and Business \\
Winston Churchill Square 4, 130 67 Prague 3, Czechia \\
\href{mailto:vladimir.holy@vse.cz}{vladimir.holy@vse.cz} \\
\end{center}

\noindent
\textbf{Abstract:}
The method of Holý, Sokol and Černý (Applied Soft Computing, 2017, Vol. 60, p. 752--762) clusters objects based on their incidence in a large number of given sets. The idea is to minimize the occurrence of multiple objects from the same cluster in the same set. In the current paper, we study computational aspects of the method. First, we prove that the problem of finding the optimal clustering is NP-hard. Second, to numerically find a suitable clustering, we propose to use the genetic algorithm augmented by a renumbering procedure, a fast task-specific local search heuristic and an initial solution based on a simplified model. Third, in a simulation study, we demonstrate that our improvements of the standard genetic algorithm significantly enhance its computational performance.
\\

\noindent
\textbf{Keywords:} Cluster Analysis, Computational Complexity, Genetic Algorithm, Local Search.
\\

\noindent
\textbf{JEL Codes:} C38, C61, C63.
\\

\section{Introduction}
\label{sec:intro}

Clustering of objects is typically based on a distance between objects or a density of objects in an area. \cite{Holy2017} propose a very different approach and cluster objects based on their joint occurrence in observed sets. It is assumed that there should typically be at most one object from each cluster in a single observed set. Deviation from this behavior is considered an error and the goal is to find the clustering which minimizes this error. Specifically, \cite{Holy2017} define the error as the average ratio of clustering decisions in which two objects from the same cluster occur in the same set. The problem of finding clusters minimizing this error can be formulated as an integer nonlinear optimization problem.

The motivation for this clustering technique lies in retail analytics. \cite{Holy2017} use this method to cluster products of a retail store into categories of substitutes. In this setting, it is assumed that most customers buy at most one product (object) from each cluster in a single visit (set), i.e. two or more products (objects) from the same cluster rarely occur together in the same shopping basket (set). This is quite reasonable behavior suggesting customers choose only one product for a given purpose. In retail, there is typically a large number of products (objects) and a huge number of shopping baskets (sets) making the method computationally very intensive. The main advantage is that no characteristics of products are needed and only a history of transactions is utilized. \cite{Holy2017} show that this method can uncover meaningful clusters in an empirical study of a Czech  drugstore chain. For other clustering approaches in retail business, see e.g.\ \cite{Jonker2004}, \cite{Tsai2004}, \cite{Reutterer2006}, \cite{Zhang2007}, \cite{Lingras2014}, \cite{Ammar2016}, \cite{Peker2017}, \cite{Wu2020}, and \cite{Sokol2019a}.

In this paper, we analyze the approach of \cite{Holy2017} from a computational point of view. First, we demonstrate that the related optimization problem is NP-hard. We build our proof on the results of \cite{Karp1972} for the Max-Cut problem. Second, we propose to numerically find clusters using the genetic algorithm combined with local search. We adjust the standard genetic algorithm by applying the renumbering procedure of \cite{Falkenauer1998} and \cite{Hruschka2003} suitable for integer representation of clusters. Note that \cite{Hruschka2003} also propose crossover and mutation operations specifically adapted for the integer representation of clusters. However, these operations are based on a distance between objects and are not applicable in our case. We therefore resort to the standard versions of these operations. Such approach in the context of clustering is used e.g. by \cite{Murthy1996}. We further enhance the genetic algorithm by computationally effective local search. This improvement of the genetic algorithm is in general suggested e.g.\ by \cite{Hamzacebi2008}. Finally, in the initial population, we include the solution of the simplified problem obtained by the k-means method with data transformed to distances. Our modifications significantly improve the computational performance in comparison to the basic genetic algorithm utilized by \cite{Holy2017}. Our numerical method falls to the field of clustering methods based on nature-inspired metaheuristics. For an overview of this field, see e.g.\ \cite{Hruschka2009}, \cite{Nanda2014} and \cite{Jose-Garcia2016}.

The rest of the paper is structured as follows. In Section \ref{sec:problem}, we formulate our problem of finding clusters. In Section \ref{sec:complex}, we prove that this problem is NP-hard. In Section \ref{sec:opti}, we propose to numerically solve this problem by the improved genetic algorithm. In Section \ref{sec:sim}, we investigate the computational performance of the proposed algorithm using simulated data. We conclude the paper in Section \ref{sec:con}.

\section{Problem Statement}
\label{sec:problem}

\subsection{Integer Formulation}
\label{sec:problemInteger}

The problem is based on the following data structure. Let $N$ be the number of sets, $M$ the number of objects and $K$ the maximum number of clusters. A matrix $\mathbf{A}$ is available with $N$ rows, $M$ columns and elements $a_{ij}$ defined as
\begin{equation}
a_{ij} = \begin{cases}
1 & \text{if object } j \text{ is present in set } i, \\
0 & \text{otherwise}.
\end{cases}
\end{equation}
Furthermore, we assume that there are at least two objects in each set, i.e.\ $\sum_{j = 1}^{M} a_{ij} \geq 2$ for all $i \in \{1,\dots,N \}$. Otherwise the set is not interesting for our goal and can be ignored. 

The variables in the model are vectors of possible object clustering $\bm{x}=(x_1,\ldots,x_{M})'$, where $x_j$ is an integer in range $1 \leq x_j \leq K$ for every $j \in \{1,\dots,M \}$. We are looking for such clustering $\bm{x}$ that minimizes the weighted occurrences of pairs of objects from one cluster in the same set.

For each set $i=1,\ldots,N$ we denote the total number of object pairs in the set $D_i$ as
\begin{equation}
\label{eqCost2}
D_i=\binom{E_i}{2}, \qquad E_i=\sum_{j=1}^{M}a_{ij},
\end{equation}
and the number of violating object pairs from one cluster within the same set $V_i$ as
\begin{equation}
\label{eqCost1}
V_i (\bm{x})= \sum_{k = 1}^{K} \binom{W_{ik}(\bm{x})}{2}, \qquad W_{ik}(\bm{x})=\sum_{j=1}^{M} a_{ij} \mathbb{I} ( x_j = c ),
\end{equation}
where $ \mathbb{I}(\cdot)$ denotes the indicator function. 
The cost function is then
\begin{equation}
\label{eqCost}
f_{cost} (\bm{x}) = \frac{1}{N} \sum_{i=1}^{N} \frac{V_i(\bm{x})}{D_i}.
\end{equation}
Therefore, the cost function equals to the average ratio of object pairs in which two object from the same cluster are in the same set. The range of the cost function is from 0 (there is no set containing object from the same cluster) to 1 (every set contain only objects from the same cluster).

The nonlinear integer programming problem is of form
\begin{equation}
\label{eqOpti}
\begin{aligned}
\min_{\bm{x}} \quad & f_{cost} (\bm{x}) \\
\text{s. t.} \quad & x_j \leq K & \text{for } j=1,\ldots,M, \\
& x_j \in \mathbb{N} & \text{for } j=1,\ldots,M. \\
\end{aligned}
\end{equation}
In practical tasks, we can assume $ N \gg M \gg K $.


\subsection{Binary Formulation}
\label{sec:problemBinary}

The model can be straightforwardly transformed from the integer program to a binary program. Let 
\begin{equation}
y_{jk} = \begin{cases}
1 & \text{if object } j \text{ is assigned to cluster } k, \\
0 & \text{otherwise},
\end{cases}
\end{equation}
for each $j$ and $k$ and let $\bm{Y}$ be matrix with elements $y_{jk}$. Note that in the optimization process itself, the number of variables $y_{jk}$ can be further reduced by $M$ as $y_{jK} = 1 - \sum_{k = 1}^{K-1} y_{jk}$ for every $j = 1,\dots,M$. 
As the number of violating object pairs is dependent on clustering $\bm{Y}$, we define $V'_i$ alternatively as
\begin{equation}
\label{eqCost1bin}
V'_i (\bm{Y})= \sum_{k=1}^{K} \binom{W'_{ik}(\bm{Y})}{2}, \qquad W'_{ik}(\bm{Y})=\sum_{j=1}^{M} a_{ij} y_{jc}.
\end{equation}
Similarly to \eqref{eqOpti}, we define the cost function as
\begin{equation}
\label{eqCostbin}
f'_{cost} (\bm{Y}) = \frac{1}{N} \sum_{i=1}^{N} \frac{V'_i(\bm{Y})}{D_i}.
\end{equation}
The non-linear binary programming model is then 
\begin{equation}
\label{eqOptibin}
\begin{aligned}
\min_{\bm{Y}} \quad & f'_{cost} (\bm{Y}) \\
\text{s. t.} \quad & \sum_{k=1}^{K} y_{jk} = 1 && \text{for } j=1,\ldots,M, \\
& y_{jk} \in \{0,1\} && \text{for } j=1,\ldots,M, \quad k = 1,\ldots,K. \\
\end{aligned}
\end{equation}

\section{Computational Complexity}
\label{sec:complex}

\begin{theorem}
Problem \eqref{eqOpti} is NP-hard.
\end{theorem}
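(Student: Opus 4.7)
The plan is to reduce the Max-Cut problem, shown NP-hard by Karp, to an instance of problem \eqref{eqOpti}. Given an undirected graph $G=(V,E)$ we construct a clustering instance with $M=|V|$ objects (one per vertex), $N=|E|$ sets (one per edge), and $K=2$. The incidence matrix $\mathbf{A}$ is defined so that for the set $i$ corresponding to edge $\{u,v\} \in E$ we put $a_{iu}=a_{iv}=1$ and $a_{ij}=0$ otherwise. This construction is clearly polynomial in the size of $G$ and satisfies the standing assumption $\sum_j a_{ij}\ge 2$.

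Next I would evaluate the cost function on this instance. Since every set contains exactly two objects, $E_i=2$ and $D_i=\binom{2}{2}=1$ for every $i$. For a clustering $\bm{x}\in\{1,2\}^M$ the quantity $V_i(\bm{x})$ equals $1$ when both endpoints of the corresponding edge lie in the same cluster and $0$ otherwise. Therefore
\begin{equation*}
f_{cost}(\bm{x}) = \frac{1}{|E|} \sum_{\{u,v\}\in E} \mathbb{I}(x_u = x_v),
\end{equation*}
so $|E|\cdot f_{cost}(\bm{x})$ counts exactly the number of monochromatic edges under the 2-coloring $\bm{x}$. Minimizing $f_{cost}$ is thus equivalent to minimizing the number of uncut edges, which in turn is equivalent to maximizing $|E|-|E|\cdot f_{cost}(\bm{x})$, i.e.\ to solving Max-Cut on $G$.

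I would conclude by noting that any (even approximate in exact arithmetic) oracle for the decision version of \eqref{eqOpti} --- ``does there exist $\bm{x}$ with $f_{cost}(\bm{x})\le c$?'' --- could be called with $c=(|E|-t)/|E|$ to decide whether $G$ admits a cut of size at least $t$. Since the reduction is polynomial and Max-Cut is NP-hard \citep{Karp1972}, problem \eqref{eqOpti} is NP-hard.

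There is no deep obstacle in the argument; the only points that need care are (i) ensuring that the constructed instance is admissible under the paper's convention $\sum_j a_{ij}\ge 2$, which holds by construction, and (ii) observing that fixing $K=2$ is sufficient for NP-hardness even though the general problem allows larger $K$. The latter is standard: NP-hardness of the restricted case $K=2$ immediately implies NP-hardness of the general problem in which $K$ is part of the input.
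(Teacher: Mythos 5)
Your reduction is correct and is in essence the same argument as the paper's: restrict to $K=2$ and to sets of cardinality $2$, observe that then $D_i=1$ and the cost function simply counts monochromatic pairs, and conclude via Max-Cut. The difference is in the instantiation. The paper reduces from Karp's \emph{weighted} Max-Cut, mapping edge weights $w(\{j,\ell\})$ onto the co-occurrence counts $P_{j\ell}$, which forces it to add a remark disposing of negative weights (and leaves implicit how a given weight matrix $\mathbf{P}$ is realized by an incidence matrix $\mathbf{A}$ of polynomial size). You instead reduce from the \emph{unweighted} (simple) Max-Cut, building $\mathbf{A}$ explicitly with one two-element set per edge; this is cleaner, makes the constructed instance of \eqref{eqOpti} fully explicit, and avoids the weight discussion entirely. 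The one point to fix is the citation: Karp (1972) establishes NP-hardness of the weighted version (via Partition), and since those weights are binary-encoded you cannot simply replicate sets $w(\{j,\ell\})$ times to simulate them polynomially; the NP-hardness of the cardinality version you actually use is due to Garey, Johnson and Stockmeyer (1976). With that reference substituted (or with an added argument restricting to polynomially bounded weights), your proof stands as written, and your closing observations---admissibility under $\sum_j a_{ij}\ge 2$ and that hardness for fixed $K=2$ implies hardness of the general problem---are exactly the right points to check.
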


\begin{proof}
In order to prove it, we reduce problem \eqref{eqOptibin} to the simplest case. Set $K = 2$, i.e.\ let there be only two clusters of objects, and let $\sum_{j=1}^{M} a_{ij} = 2$ for all $i$, i.e.\ the size of all sets is 2. The number of object pairs is therefore $D_i = 1$ in each set $i$. Also, as we have only two clusters, we can define 
\begin{align}
z_{j} &= \begin{cases}
1 & \text{if object } j \text{ is assigned to cluster 1}, \\
0 & \text{if object } j \text{ is assigned to cluster 2},
\end{cases} 
\end{align}
and let
\begin{equation}
P_{j\ell} \coloneqq \sum_{i=1}^{N} a_{ij} a_{i\ell} \quad \text{for } j,\ell=1,\ldots,M, \quad j \neq \ell,
\end{equation}
which is the number of occurrences of every pair of objects in the same set. 

The problem can now be formulated as 
\begin{equation}
\label{eqOptibin2}
\begin{aligned}
\min_{z} \quad  \sum_{\forall j<\ell} P_{j\ell} \mathbb{I} ( z_j = z_\ell )  &&  \\
\text{s. t.}  \hspace{1em} z_{j} \in \{0,1\} \hspace{1em} \text{for } j=1,\ldots,M.
\end{aligned}
\end{equation}
Without the loss of generality, we can rewrite the objective function to the maximization form as
\begin{equation}
\label{eqOptibin3}
\begin{aligned}
\max_{z} \quad  \sum_{\forall j<\ell} P_{j\ell} \mathbb{I} ( z_j \neq z_\ell )  &&  \\
\text{s. t.}  \hspace{1em} z_{j} \in \{0,1\} \hspace{1em} \text{for } j=1,\ldots,M.  
\end{aligned}
\end{equation}
Let $\mathbf{P}$ denote matrix with elements $P_{j\ell}$. The decision problem \eqref{eqOptibin3} can be formulated as follows:

\begin{problem}
\label{problem1}
Is there a binary vector $z = \{ z_1, z_2, \dots, z_{M}  \}$ such that for a given $M, \mathbf{P}$, and $C$ holds
\begin{equation}
\label{decisionMod}
\sum_{\forall j<\ell} P_{j\ell} \mathbb{I} ( z_j \neq z_\ell ) \geq C?
\end{equation}
\end{problem}

Therefore, an instance of the problem is given by $\{ M, \mathbf{P},C \}$.
Now, we show that the Max-Cut problem can be reduced to Problem \eqref{decisionMod}.
The Max-Cut is an NP-hard decision problem (see the proof of NP-hardness in \citealp{Karp1972}) defined as follows: 

\begin{problem}
\label{problem2}
Having a graph $G = (V_G,E_G)$, weighting function $w: E_G \rightarrow \mathbb{Z}$ and a positive integer $C$, is there a set $S \subset V_G$ such that
\begin{equation}
\label{decisionMaxCut}
\sum_{\{j,\ell\} \in V_G, j \in S, \ell \notin S } w(\{j,\ell\}) \geq C?
\end{equation}
\end{problem}

Hence, an instance of the Max-Cut problem is given by $\{ V_G, E_G, w, C \}$.
In order to prove that the Max-Cut problem is reducible to Problem \ref{problem1}, we need to shown that every instance of Problem \ref{problem2} is reducible to Problem \ref{problem1}. The reduction function is following:  
\begin{equation} 
g : \{V_G, E_G, w, C\} \mapsto \{n,\mathbf{P},C\}. 
\end{equation} 
$V_G$ is mapped to a vector $(1,2,\dots,n)$ where $n$ is the number of vertices. Values of weighting function $w(j,\ell)$ are directly translated into $P_{j\ell}$. If graph $G = (V_G,E_G)$ is not complete, then $P_{j\ell}$ is set to zero for missing edges. A value of $C$ remains the same.

\begin{remark}
Note that the Max-Cut problem does not assume non-negativity weighting function $w$ and in our setup $P_{j\ell}$ are naturally non-negative as it is number of instances that two objects are in the same set; however, this is not a problem. As it is shown in reduction of the Knapsack problem to the Max-Cut problem through the Partition problem \citep{Karp1972}, there are two cases when the $w(j,\ell)$ are negative.
\begin{enumerate}
\item If the sum of all items in the Knapsack form is lower than the capacity, then $w(j,\ell)$ may be negative. However, such instance is trivial to solve.
\item If the weight of $i$-th item $w^{(KP)}_j$ in the Knapsack problem is negative, then $w(j,\ell)$ may be negative. However, this case can be easily transformed to instance without negative weights. Such transformation consists in multiplication of negative weight $w^{(KP)}_j$ by $-1$, switching affected binary variable $x_j^{(KP)}$ to $1-x_j^{(KP)}$ and increase in the total capacity of knapsack by $-w^{(KP)}_i$.
\end{enumerate}
Therefore, the cases in which $w(j,\ell)$ are negative can be straightforwardly transformed to non-negative cases or are easy to solve.
\end{remark}

As a result, every \textit{hard} instance of the Max-Cut problem given by $[V_G, E_G, w, C]$ can be transformed to Problem \ref{problem1} using reduction function $g$ and therefore problem \eqref{eqOpti} is NP-hard.
\end{proof}

\section{Numerical Optimization}
\label{sec:opti}

\subsection{Genetic Algorithm}
\label{sec:optiGen}

In our model, it is not possible to calculate the similarity between individual clusterings without losing important information. Hence, we cannot use standard clustering methods such as k-means, DBSCAN or hierarchical clustering methods. For this reason, an integer genetic algorithm is chosen to find a suitable clustering.

The standard genetic algorithm has the following phases. First, the initial population is generated, where by population we mean a set of different clusterings $\bm{x}$ called individuals. Objects $x_j$ assigned to clusters in clustering $\bm{x}$ are called genes. Individuals can also be inserted to population if promising candidates are available.
After the initial population is prepared, in each generation following steps are conducted:
\begin{enumerate}
\item
Designation of given number of the best individuals as elite in order to preserve them.
\item
Modification of non-elite individuals.
\begin{enumerate}
\item
By crossover, when two different individuals are randomly selected using roulette wheel selection and by randomly swapping genes two new individuals are created, replacing the original ones.
\item
By mutation, when randomly selected genes are randomly switched.
\end{enumerate}
\item
Evaluation of all new individuals.
\item
If the specified number of generations or a sufficiently high quality result is not achieved, the algorithm returns to Step 1, else the best individual is chosen and the algorithm ends.
\end{enumerate}

In order to apply the genetic algorithm, several parameters are to be chosen. The first one is the \textit{size of population}. With bigger population of individuals more possible clusterings are explored, which can result in finding better solution at the cost of higher computational demands. This parameter is task-specific and in our case a large population is preferred (see \citealp{Holy2017}). The \textit{number of iterations}, sometimes called generations in terms of evolutionary algorithms, is another parameter. Simply put, it is a parameter of how long the solution space should be searched. The\textit{ number of elites} parameter determines how many individuals with the best evaluation are declared elite and are passed to the next generation without any alteration. This prevents the loss of the best individuals in population. Parameter \textit{mutation chance} determines the probability of changing gene value to a random one. The purpose of mutation in genetic algorithm is to introduce more diversity into population, thus to avoid reaching local minimum by preventing the individuals from becoming too similar to each other. However, if a high value is selected, then the crossover effect is suppressed and algorithm is more of a random search of the solution space. 

By far the most computationally demanding part of the algorithm is evaluating newly found clusters as it is necessary to work with a matrix with large dimensions, specifically $ N \times M $.

\subsection{Renumbering Procedure}
\label{sec:optiRenum}

There are several complications concerning the application of traditional genetic algorithms for clustering tasks. The main one is the clustering codification. In our case, the clustering  $ \bm{x} $ has $M$ elements $x_j$ with values $ \{1, \dots, K \} $. If the standard genetic algorithm is used, the resulting clustering $\bm{x}$ allows symmetries in the solution space, for example, assuming $M=5$ and $K=3$, solution $\bm{x} = (3, 3, 2, 1, 3)$ is in fact identical to $\bm{x} = (2, 2, 1, 3, 2)$ but standard genetic algorithm treats them as significantly different. This has inappropriate consequences, especially in the case of crossover. This shortcoming can be remedied by introducing a simple rule (see \citealp{Falkenauer1998}) when cluster numbers are renumbered to start from the smallest based on the first occasion of each cluster, e.g.\ the solution above would be renumbered to $\bm{x} = (1, 1, 2, 3, 1)$. The renumbering procedure allows the suitable application of classic genetic algorithms in clustering problems, avoiding the problems of redundant codification (see \citealp{Hruschka2003}).

\subsection{Local Search}
\label{sec:optiLoc}

Second proposed improvement is the implementation of a task-specific local search. The local search function consists in generating all possible \textit{neighbor} individuals for the best individual in each iteration and checking whether the new individuals improve the cost function. If newly found individual gives better result than the original one, then the original is simply replaced. If the individual has already been checked in the previous iterations, then no local search is executed as the individual cannot be improved by local search. 
 
By far the most time-consuming part of used genetic algorithm is the frequently called enumeration of $f_{cost} (\bm{x})$. In the proposed procedure, we therefore try to approach enumeration efficiently by repeated usage of the intermediate calculations, similar to dynamic programming approach. 
The goal is to find neighbors of individual $ \bm{x} $ , i.e.\ all possible vectors $ \bm{x}^{(j,k_1)} $ for all $j$ and $k_1$, which differs from $ \bm{x} $ in exactly one element $j$ which is changed from the original cluster $k_0$ to the new cluster $k_1$. 
To find neighbors, it is not necessary to calculate the value of the cost function from the beginning, but to use already prepared calculations. The main idea follows from the decomposition: 
\begin{align} 
V(\bm{x}^{(j,k_1)}) := \sum_{i=1}^{N} V_i(\bm{x}^{(j,k_1)}) =& \sum_{i=1}^{N} V_i(\bm{x}) \nonumber \\ 
&+ \sum_{i=1}^{N} \left(\binom{v_{ik_0}(\bm{x}^{(j,k_1)})}{2} - \binom{v_{ik_0}(\bm{x})}{2} \right)   \nonumber \\ 
&+  \sum_{i=1}^{N} \left( \binom{v_{ik_1}(\bm{x}^{(j,k_1)})}{2} - \binom{v_{ik_1}(\bm{x})}{2} \right)  \nonumber\\
=& \sum_{i=1}^{N} V_i(\bm{x}) \label{local:eq} \\ 
&+ \underbrace{\sum_{i=1}^{N} \binom{v_{ik_0}(\bm{x}^{(j,k_1)})}{2}}_{V^{(0)}({\bm{x}^{(j,k_1)}})} - \underbrace{\sum_{i=1}^{N} \binom{v_{ik_0}(\bm{x})}{2}}_{V^{(0)}({\bm{x}})}    \nonumber \\ 
&+    \underbrace{\sum_{i=1}^{N} \binom{v_{ik_1}(\bm{x}^{(j,k_1)})}{2}}_{V^{(1)}({\bm{x}^{(j,k_1)}})} - \underbrace{\sum_{i=1}^{N} \binom{v_{ik_1}(\bm{x})}{2}}_{V^{(1)}({\bm{x}})} \nonumber
\end{align}
With the knowledge of $V(\bm{x}^{(j,k_1)})$, the cost function $f_{cost} (\bm{x}^{(j,k_1)})$ can be computed quickly as the other parts of the formula remains unchanged.

Let $\bm{D}$ is a vector of all $D_i$ and $\mathbf{B}$ is the matrix $N \times K$ of the number of objects by set and cluster. In the algorithm, the operation $/$ stands for element-wise division and $\textbackslash$ is a set subtraction operation.
The local search function which finds all neighbors of clustering $\bm{x}$ is described in Algorithm \ref{alg:LocalSearch}. 
\begin{algorithm}[H]
    \begin{algorithmic}[1]
        \normalsize
        \REQUIRE{$\bm{x}, V(\bm{x}),\mathbf{A},\mathbf{B},\bm{D}$}
        \STATE{$V_{best} := V(\bm{x})$}
        \STATE{$\bm{x}_{best} := \bm{x}$}
        \FOR{$j \in \{1,\dots,M\}$}
             \STATE{$ \bm{j_0} := \textsc{which}(\mathbf{A}[,j]=1)$}
             \STATE{$ k_0 := \bm{x}[j]$}
             \STATE{$\bm{B}^{(0)} := \mathbf{B}[,k_0]$}
             \IF {$V^{(0)}({\bm{x}}) = \textsc{NULL}$} 
                  \STATE{$V^{(0)}({\bm{x}}) := \textsc{sum}(\textsc{choose}(\bm{B}^{(0)},2)/\bm{D})$}
             \ENDIF      
             \STATE{$\bm{B}^{(0)}[\bm{j_0}] := \bm{B}^{(0)}[\bm{j_0}]-1$}
             \STATE{$V^{(0)}({\bm{x}^{(j,k_1)}}) := \textsc{sum}(\textsc{choose}(\bm{B}^{(0)},2)/\bm{D})$}                             
             \FOR{$k_1 \in \{1,\dots,K\} \ \textbackslash \ k_0$} 
                  \STATE{$\bm{B}^{(1)} := \mathbf{B}[,k_1]$}
                  \STATE{$V^{(1)}({\bm{x}}) := \textsc{sum}(\textsc{choose}(\bm{B}^{(1)},2)/\bm{D})$}      
                  \STATE{$\bm{B}^{(1)}[\bm{j_0}] := \bm{M}^{(1)}[\bm{j_0}]+1$}
                  \STATE{$V^{(1)}({\bm{x}^{(j,k_1)}}) := \textsc{sum}(\textsc{choose}(\bm{B}^{(1)},2)/\bm{D})$}       
                  \IF {$(V(\bm{x}) - V^{(0)}({\bm{x}}) + V^{(0)}({\bm{x}^{(j,k_1)}}) - V^{(1)}({\bm{x}}) + V^{(1)}({\bm{x}^{(j,k_1)}}))  < V_{best})$}
                    \STATE{$V_{best} := V(\bm{x}) - V^{(0)}({\bm{x}}) + V^{(0)}({\bm{x}^{(j,k_1)}}) - V^{(1)}({\bm{x}}) + V^{(1)}({\bm{x}^{(j,k_1)}})$}
                    \STATE{$\bm{x}_{best} := \bm{x}$}
                    \STATE{$\bm{x}_{best}[j] := k_1$  }      
                    \ENDIF        
            \ENDFOR              
        \ENDFOR
        
        \RETURN $\bm{x}_{best}, V_{best}$
    \end{algorithmic}
    \caption{Local Search function}
    \label{alg:LocalSearch}
\end{algorithm}


The proposed local search function is significantly faster than naive neighbor generation and repeated cost function calls. A naive procedure would call the cost function $ M \cdot (K - 1) $ times as the cluster of each object can be changed. The enumeration complexity of the cost function is of the order of $ N \cdot M $. The total computational complexity of generating and enumerating of all neighbors is then $ \mathcal{O} (M \cdot K \cdot N \cdot M) = \mathcal{O} (M ^ 2 \cdot K \cdot N) $.

The proposed method also evaluates all neighbors, i.e.\ $ M \cdot (K - 1) $ clusterings, but instead of repeated calls to the cost function, intermediate calculations are used repeatedly. The enumerations of the number of object pairs from the same cluster in one set is performed for the original clustering with a complexity of the order of $ N \cdot M $ and then is used repeatedly. In the enumeration of the change in the cost function of individual neighbors, only the parts that actually differ are calculated, this is of the order of $2 \cdot N $ for one neighbor since only two sums change in respect to original clustering, see \eqref{local:eq}. The total computational complexity is then $ \mathcal{O} (N \cdot M + M \cdot (\ K - 1) \cdot N \cdot 2) = \mathcal{O} (N \cdot M + M \cdot \ K \cdot N) = \mathcal{O} (M \cdot K \cdot N) $. Since $ M $ is expected to be of high value, e.g.\ hundreds or thousands, the time saved is significant.

\subsection{Initial Solution}
\label{sec:optiInit}

Finally, we obtain an appropriate initial solution. We transform the data set to dissimilarity matrix $\mathbb{Q}$ based on the simplified relationship between objects. The elements $q_{j\ell}$ of matrix $\mathbb{Q}$ are computed as the proportions of sets in which objects appear together in the same set:
\begin{equation}
q_{j\ell} \coloneqq \frac{1}{N} \sum_{i=1}^{N} a_{ij} a_{i\ell} \quad \text{for } j,\ell=1,\ldots,M, \quad j \neq \ell,
\end{equation}
Using the dissimilarity matrix $\mathbb{Q}$, we can use the standard clustering algorithms such as k-means to find initial solution for genetic algorithm. A substantial amount of information is lost through the data transformation, nevertheless, the solution found by k-means can serve as suitable initial point. The main advantage is that k-means is a very fast method, which allows finding a suitable initial point in a matter of seconds.

\section{Simulation Study}
\label{sec:sim}

\subsection{Setup}
\label{sec:simSetup}

In the simulation study, we compare four modifications of the genetic algorithm for clustering problems in terms of the best solution found and the speed of convergence. The modifications are as follow:
\begin{enumerate}[label=(\alph*)]
\item \textit{Standard} algorithm with completely random initial population and without local search;
\item \textit{Only Local} algorithm with completely random initial population but using the proposed local search function;
\item \textit{Only Init} algorithm with inserted initial solution from the simplified model and without local search;
\item \textit{Local \& Init} algorithm with both inserted initial solution from the simplified model and using the proposed local search approach.
\end{enumerate}

The main problem of the studied clustering model is the difficulty of finding the optimal solution. Even in test instances, we cannot determine the optimum with certainty. Therefore, as a benchmark we use the solution of the simplified model found by the k-means method, as described in Section \ref{sec:optiInit} and used in \textit{Only Local} and \textit{Local \& Init} algorithms. For parameters of the k-means method we use the Euclidean distance metric with 1000 starting locations, the maximum number of iterations to 1000 and the Hartigan-Wong's algorithm. With these parameters, the k-means solution is found almost instantly.

The following parameters from \cite{Holy2017} are used for all modifications. The population size is set to 500 individuals. Initial population is generated randomly but in the case of \textit{Only Init} a \textit{Local \& Init} we also supply the k-means solution of the simplified model. We always use the fixed number of 500 \textit{generations} even though simulations show that vast majority of instances converge faster. The \textit{elite ratio} is set to 0.1. The \textit{mutation chance} is set to 0.01 to allow algorithm to concentrate on improving one point while retaining some exploratory ability of mutation. Unlike the local search which is conducted only on the best individual of the generation, the mutation can take place on any non-elite individual. 

The basic parameters for data generation are the number of sets $N$, objects $M$ and clusters $K$. Sets are generated independently. For each set, it is randomly determined how many unique clusters exist in the set. In doing so, every cluster has the same probability of occurrence given by the parameter $\rho$. From each cluster assigned to the set, one product is assigned to the set. In the next step, based on the $\pi$ parameter, other products from the same cluster are assigned into sets, thus creating situations that violate the model's assumptions.

We work with the default values $N = 10000$, $M = 100$, $K = 10$, $\rho = 0.50$ and $\pi = 0.03$ which roughly correspond to retail datasets. In the next part, we investigate the effects of changes in the values of parameters $N$, $M$, $K$, and $\pi$ on the results of the genetic algorithm modifications.

\subsection{Results}
\label{sec:simRes}

Throughout this section, we assume the parameters are set to their default value unless said otherwise -- then only one parameter changes at a time. Each algorithm is run 10 times for each dataset and each dataset is generated 100 times for each scenario. Computation time is reported for 3.40 GHz CPU and the algorithm is implemented in R software. We consider the k-means solution of the simplified model from Section \ref{sec:optiInit} as the \emph{benchmark} solution. We report values of the objective function standardized to this benchmark solution (standardized objective = objective / benchmark objective).

First, we take a look at the progress of the objective function over time in Figure \ref{fig:conv}. We report the objective function over time rather than over the number of iterations as some versions of the algorithm require to compute local search in each iteration. The computation time spent on local search is, however, quite negligible. Apparently, for default values of the parameters, the \textit{Standard} algorithm cannot overcome the benchmark solution. The local search improvement in the \textit{Only Local} algorithm allows surpassing the benchmark solution. The \textit{Only Init} and \textit{Local \& Init} algorithms use the benchmark solution as their initial point and both are able to quickly improve it with the \textit{Local \& Init} algorithm being considerably faster.

Next, we focus on differences in performance based on the changes in parameters of the data generating process and the number of clusters. The average found solutions after 500 iterations are shown in Figure \ref{fig:param} and Table \ref{tab:sim}. With a low number of sets $N$, the benchmark approach is clearly worse than the genetic algorithms. With an increasing number of sets $N$, the \textit{Standard} algorithm results are worsening and the differences between the benchmark approach and other genetic algorithms shrink. However, the \textit{Only Init} and \textit{Local \& Init} algorithms are still able to improve their initial solution. With an increasing number of objects $M$ in the dataset, the benchmark solution proves to be insufficient -- all tested modifications of the genetic algorithm give significantly better results. The same can be said about an increasing probability of extra objects $\pi$. Simply put, the more violations occur in the dataset, the worse is the benchmark solution. Concerning the impact of the true number of clusters $K$, the genetic algorithm gives better results than the benchmark approach when $K$ is significantly lower than $M$. With increasing $K$, the \textit{Standard} algorithm ceases to be suitable. With high values of $K$, \textit{Only Local}, \textit{Only Init} and \textit{Local \& Init} algorithms give very similar results to the benchmark approach. Overall, it is highly advantageous to use both proposed improvements of the genetic algorithm. Not only the \textit{Local \& Init} algorithm ends in the best solution among the four candidates, but is also the fastest one to reach it.

Finally, we examine variation in results from the perspective of repeated data generation and repeated algorithm runs. In Table \ref{tab:sim}, we report two kinds of standard deviations. The standard deviation capturing repeated data generation (labeled as SD/Sim) is obtained by first averaging objective values of all runs based on the same dataset and then taking standard deviation over all generated datasets. Conversely, the standard deviation capturing repeated algorithm runs (labeled as SD/Alg) is obtained by first taking standard deviation of all runs based on the same dataset and then averaging it over all generated datasets. We can see that SD/Alg is quite small except for the \textit{Standard} algorithm in some scenarios. Nevertheless, in most cases, random data generation is the dominant source of variation.

\begin{figure}
\centering
\includegraphics[width=0.9\textwidth]{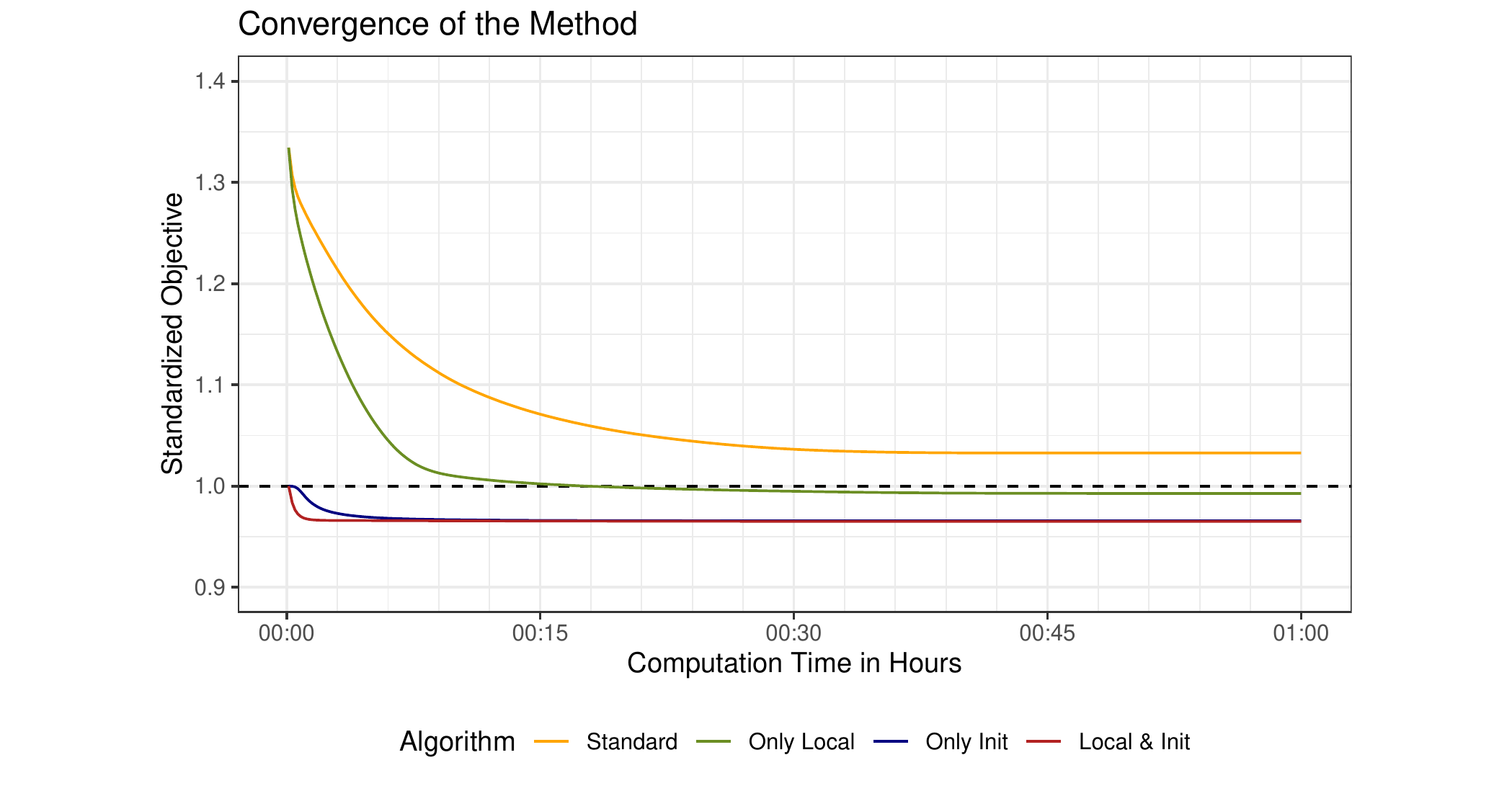}
\caption{Progress of the standardized objective function over time.}
\label{fig:conv}
\end{figure}

\begin{figure}
\centering
\includegraphics[width=0.9\textwidth]{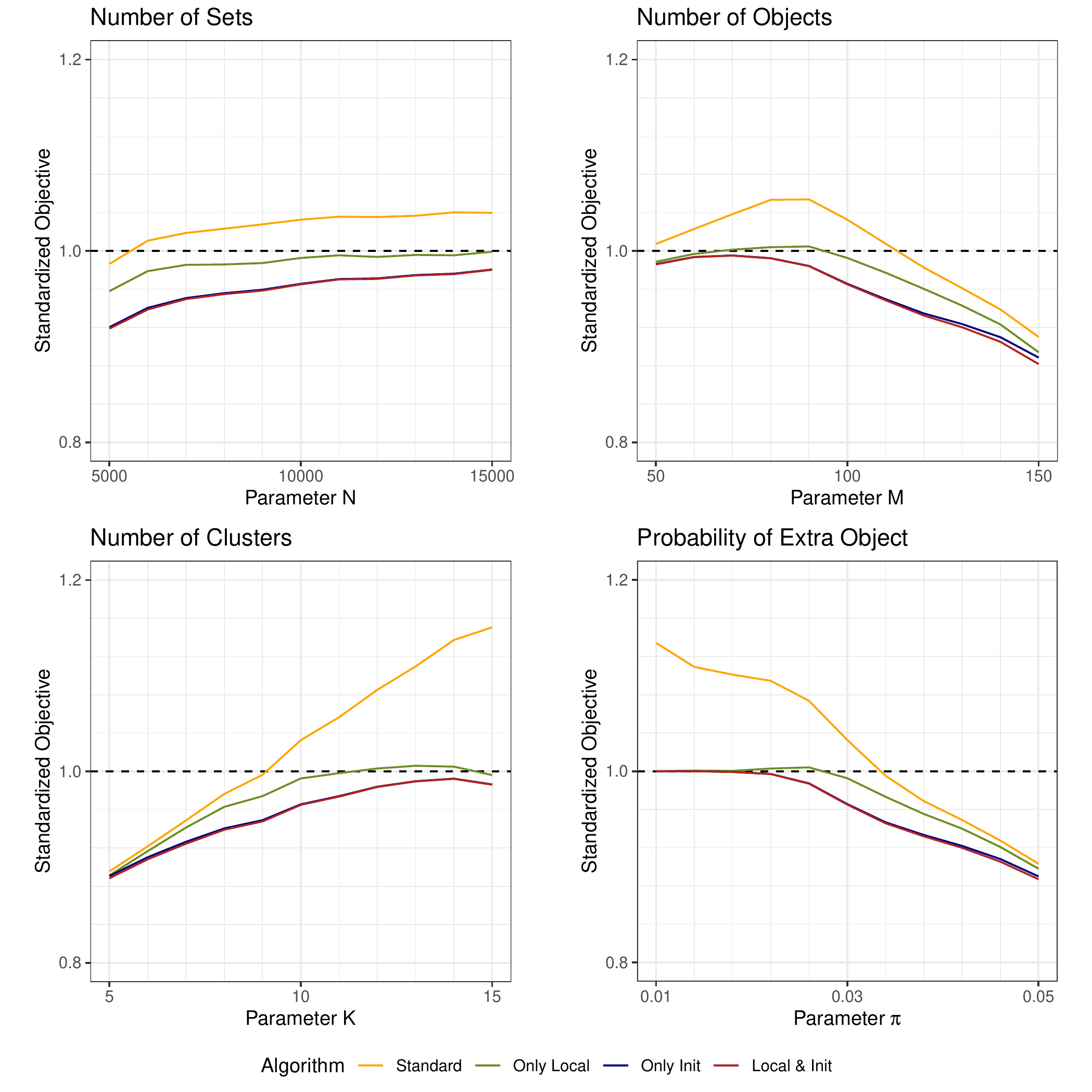}
\caption{Impact of the data generating process.}
\label{fig:param}
\end{figure}

\begin{table}
\centering
\caption{Mean values with standard deviations of the standardized objective for several scenarios of the data generating process.}
\label{tab:sim}
\begin{tabular}{lrrrrrrr}
\toprule
& \multicolumn{4}{c}{Simulation Scenario} & \multicolumn{3}{c}{Standardized Objective} \\
Algorithm & $N$ & $M$ & $K$ & $\pi$ & Mean & SD/Sim & SD/Alg \\ 
\midrule
Standard      & \GC{10000} & \GC{100} & \GC{10} & \GC{0.03} & 1.0326 & 0.0369 & 0.0202 \\ 
              &      5000  & \GC{100} & \GC{10} & \GC{0.03} & 0.9864 & 0.0458 & 0.0188 \\ 
              &     15000  & \GC{100} & \GC{10} & \GC{0.03} & 1.0397 & 0.0281 & 0.0194 \\ 
              & \GC{10000} &      50  & \GC{10} & \GC{0.03} & 1.0073 & 0.0452 & 0.0366 \\ 
              & \GC{10000} &     150  & \GC{10} & \GC{0.03} & 0.9100 & 0.0568 & 0.0055 \\ 
              & \GC{10000} & \GC{100} &      5  & \GC{0.03} & 0.8955 & 0.0618 & 0.0049 \\ 
              & \GC{10000} & \GC{100} &     15  & \GC{0.03} & 1.1505 & 0.0571 & 0.0395 \\ 
              & \GC{10000} & \GC{100} & \GC{10} &     0.01  & 1.1341 & 0.0391 & 0.1045 \\ 
              & \GC{10000} & \GC{100} & \GC{10} &     0.05  & 0.9031 & 0.0618 & 0.0052 \\ 
\midrule
Only Local    & \GC{10000} & \GC{100} & \GC{10} & \GC{0.03} & 0.9926 & 0.0308 & 0.0179 \\ 
              &      5000  & \GC{100} & \GC{10} & \GC{0.03} & 0.9580 & 0.0436 & 0.0206 \\ 
              &     15000  & \GC{100} & \GC{10} & \GC{0.03} & 0.9991 & 0.0213 & 0.0149 \\ 
              & \GC{10000} &      50  & \GC{10} & \GC{0.03} & 0.9887 & 0.0405 & 0.0072 \\ 
              & \GC{10000} &     150  & \GC{10} & \GC{0.03} & 0.8940 & 0.0563 & 0.0066 \\ 
              & \GC{10000} & \GC{100} &      5  & \GC{0.03} & 0.8913 & 0.0616 & 0.0050 \\ 
              & \GC{10000} & \GC{100} &     15  & \GC{0.03} & 0.9962 & 0.0325 & 0.0153 \\ 
              & \GC{10000} & \GC{100} & \GC{10} &     0.01  & 1.0000 & 0.0000 & 0.0000 \\ 
              & \GC{10000} & \GC{100} & \GC{10} &     0.05  & 0.8980 & 0.0615 & 0.0061 \\ 
\midrule
Only Init     & \GC{10000} & \GC{100} & \GC{10} & \GC{0.03} & 0.9656 & 0.0255 & 0.0014 \\ 
              &      5000  & \GC{100} & \GC{10} & \GC{0.03} & 0.9203 & 0.0375 & 0.0029 \\ 
              &     15000  & \GC{100} & \GC{10} & \GC{0.03} & 0.9805 & 0.0177 & 0.0009 \\ 
              & \GC{10000} &      50  & \GC{10} & \GC{0.03} & 0.9862 & 0.0400 & 0.0004 \\ 
              & \GC{10000} &     150  & \GC{10} & \GC{0.03} & 0.8885 & 0.0504 & 0.0032 \\ 
              & \GC{10000} & \GC{100} &      5  & \GC{0.03} & 0.8906 & 0.0597 & 0.0036 \\ 
              & \GC{10000} & \GC{100} &     15  & \GC{0.03} & 0.9864 & 0.0329 & 0.0004 \\ 
              & \GC{10000} & \GC{100} & \GC{10} &     0.01  & 1.0000 & 0.0000 & 0.0000 \\ 
              & \GC{10000} & \GC{100} & \GC{10} &     0.05  & 0.8899 & 0.0556 & 0.0038 \\ 
\midrule
Local \& Init & \GC{10000} & \GC{100} & \GC{10} & \GC{0.03} & 0.9650 & 0.0257 & 0.0006 \\ 
              &      5000  & \GC{100} & \GC{10} & \GC{0.03} & 0.9188 & 0.0381 & 0.0014 \\ 
              &     15000  & \GC{100} & \GC{10} & \GC{0.03} & 0.9803 & 0.0178 & 0.0005 \\ 
              & \GC{10000} &      50  & \GC{10} & \GC{0.03} & 0.9859 & 0.0410 & 0.0001 \\ 
              & \GC{10000} &     150  & \GC{10} & \GC{0.03} & 0.8817 & 0.0522 & 0.0023 \\ 
              & \GC{10000} & \GC{100} &      5  & \GC{0.03} & 0.8883 & 0.0603 & 0.0027 \\ 
              & \GC{10000} & \GC{100} &     15  & \GC{0.03} & 0.9861 & 0.0335 & 0.0001 \\ 
              & \GC{10000} & \GC{100} & \GC{10} &     0.01  & 1.0000 & 0.0000 & 0.0000 \\ 
              & \GC{10000} & \GC{100} & \GC{10} &     0.05  & 0.8870 & 0.0567 & 0.0033 \\ 
\bottomrule
\end{tabular}
\end{table}

\section{Conclusion}
\label{sec:con}

The clustering method of \cite{Holy2017} offers a unique way of categorizing products in retail stores. The main limitation of the method lies in its computational complexity. To make the method more usable for practitioners, we revisit the algorithm finding an approximate solution and improve it in several ways. We augment the basic genetic algorithm by the renumbering procedure, the local search heuristic and the initial solution based on the simplified model. Although, these are rather common approaches, we tailor them to our specific problem in a non-trivial and efficient way. On a final note, the presented formulation of our problem is quite general which allows for straightforward use in other potential applications.


\section*{Acknowledgements}
\label{sec:acknow}

Computational resources were supplied by the project "e-Infrastruktura CZ" (e-INFRA LM2018140) provided within the program Projects of Large Research, Development and Innovations Infrastructures.

\section*{Funding}
\label{sec:fund}

The work on this paper was supported by the Internal Grant Agency of the Prague University of Economics and Business under project F4/27/2020.


\end{document}